\title{Compatible features for Monotonic Policy Improvement}
\author{%
    Marcin B. Tomczak\thanks{Correspondence to mbt27@cam.ac.uk} \\
    PROWLER.io \\
    72 Hills Road, Cambridge, CB2 1LA, UK \\
    \texttt{marcin@prowler.io} \\
   \And
    Sergio Valcarcel Macua \\
    PROWLER.io \\
    72 Hills Road, Cambridge, CB2 1LA, UK \\
    \texttt{sergio@prowler.io} \\
   \And
    Enrique Munoz de Cote \\
    PROWLER.io  \\
    72 Hills Road, Cambridge, CB2 1LA, UK \\
    \texttt{enrique@prowler.io} \\
   \And
    Peter Vrancx \\
    PROWLER.io  \\
    72 Hills Road, Cambridge, CB2 1LA, UK \\
    \texttt{peter@prowler.io} \\
}
\newtheorem{theorem}{Theorem}
\newcommand{\refthm}[1]{Theorem \ref{#1}}
\newcommand{\refeqn}[1]{Equation (\ref{#1})}
\begin{document}
\maketitle

\begin{abstract}
Recent policy optimization approaches have achieved substantial empirical success by constructing surrogate optimization objectives. The Approximate Policy Iteration objective \cite{pmlr-v37-schulman15, Kakade:2002:AOA:645531.656005} has become a standard optimization target for reinforcement learning problems. Using this objective in practice requires an estimator of the advantage function. Policy optimization methods such as those proposed in \cite{schulman2015high} estimate the advantages using a parametric critic. In this work we establish conditions under which the parametric approximation of the critic does not introduce bias to the updates of surrogate objective. These results hold for a general class of parametric policies, including deep neural networks. We obtain a result analogous to the compatible features derived for the original Policy Gradient Theorem \cite{Sutton:1999:PGM:3009657.3009806}. As a result, we also identify a previously unknown bias that current state-of-the-art policy optimization algorithms \cite{pmlr-v37-schulman15, DBLP:journals/corr/SchulmanWDRK17} have introduced by not employing these compatible features. 
\end{abstract}

\section{Introduction}
Model-free reinforcement learning has been applied successfully to a variety of problems \cite{Duan:2016:BDR:3045390.3045531}. Nevertheless, many of the state-of-the-art approaches introduce bias to the policy updates and hence do not guarantee convergence. This might lead to oversensitivity of the algorithms to changes in initial conditions and has an adverserial influence on reproducibility \cite{DBLP:journals/corr/abs-1709-06560}. 

Policy updates often use values learned by a critic to  the variance of stochastic gradients. However, arbitrary parametrization of the critic can introduce bias into policy updates. Early work in \cite{Sutton:1999:PGM:3009657.3009806} provides restrictions on the form of the function approximator under which the policy gradient remains unbiased and the convergence of learning is guaranteed. The notion of compatible features introduced in \cite{Sutton:1999:PGM:3009657.3009806} also has connections to the natural policy gradient \cite{Kakade:2001:NPG:2980539.2980738}. Compatible features remain an active area of research \cite{DBLP:journals/corr/abs-1902-02823}.

Modern policy optimization algorithms focus on optimizing a surrogate objective that approximates the value of the policy \cite{pmlr-v37-schulman15,DBLP:journals/corr/SchulmanWDRK17}. This surrogate objective remains an accurate optimization target when the policy used to gather the data is close to the policy being the optimized \cite{pmlr-v37-schulman15, Kakade:2002:AOA:645531.656005}. In this work we investigate the conditions on the parametrization of critic under which the updates of the surrogate objective remain unbiased. 

\section{Preliminaries}
We assume a classical MDP formulation as in \cite{Sutton:1999:PGM:3009657.3009806}. We consider an MDP being a tuple $\langle S, A, P, R, \rho_0, \gamma \rangle$, $S$ is a set of states, $A$ is a set of actions, $P:S \times A  \times S \rightarrow \mathbb{R}_{+}$ is a transition model, $R:S \times A \rightarrow \mathbb{R}$ is a reward function, $\rho_0$ is an initial distribution over states $S$ and $\gamma \in (0,1)$ is a discount factor. We denote a trajectory as $\tau = (s_0,  a_0, s_1, a_1, \ldots )$. Given a stochastic policy $\pi: S \times A \rightarrow \mathbb{R}_{+}$, we use $\tau \sim \pi$ to denote that the trajectory has been generated by following policy $\pi$, i.e. $s_{t+1} \sim P(\cdot|s_t, a_t)$, $a_t \sim \pi(\cdot|s_t)$, and $s_0 \sim \rho_0(\cdot)$. 

We denote unnormalized discounted state occupancy measure as $\rho_\pi (s) = \mathbb{E}_{\tau\sim\pi} \sum_{t\ge 0} \gamma^t P(s_t=s)$.
The value function is defined as $V^\pi (s) = \mathbb{E}_{\tau\sim\pi}  [\sum_{t \ge 0} \gamma^t R(s_t, a_t) |s_0 = s]$ and the state-value function is $Q^\pi (s,a)= \mathbb{E}_{\tau\sim\pi}  [\sum_{t \ge 0} \gamma^t R(s_t, a_t) |s_0 = s, a_0 = a]$. 
The advantage function is defined as $A^\pi \left(s,a\right) = Q^\pi\left(s,a\right) - V^\pi\left(s\right)$. To emphasize that the policy $\pi$ is parametrized by parameters $\theta$ we use notation $\pi_\theta$. 
We consider policies such that $\frac{\partial \pi_\theta(a|s)}{\partial \theta}$ exist and is continuous for every $s \in S$ and $a \in A$. 
This is a general class of policies that includes, e.g., expressive deep neural networks.
Under this discounted cost setting, reinforcement learning algorithms seek a policy $\pi$ that maximizes its value defined as: 
\begin{equation}
J (\pi) = \int \rho_0(s) V^\pi(s) ds.
\end{equation}

When dealing with two different parametric policies, $\pi_\theta$ and $\pi_{\tilde\theta}$, with parameters $\theta$ and $\tilde\theta$, respectively,
we follow the notation in \cite{pmlr-v37-schulman15} and define a surrogate policy optimization objective as
\begin{equation}
    L_{\pi_{\theta}}(\pi_{\tilde\theta}) : = J(\pi_\theta) + \mathbb{E}_{s \sim \rho^\pi (\cdot), a \sim \pi_{\tilde\theta}(\cdot|s) } A^{\pi_{\theta}}(s,a).
\end{equation}
When the expected divergence between $\pi_\theta$ and $\pi_{\tilde\theta}$ is small $L_{\pi_{\theta}}(\pi_{\tilde\theta})$ can be treated as a good approximation of 
 $J(\pi_{\tilde\theta})$ \cite{pmlr-v37-schulman15, Kakade:2002:AOA:645531.656005}.

\section{Related work}
We begin by restating a well-known result presented in \cite{Sutton:1999:PGM:3009657.3009806} that allows to calculate the gradient of $J(\pi_\theta)$ w.r.t policy parameters $\theta$:
\begin{equation}
\label{eqn:pg_theorem}
\frac{\partial J(\pi_\theta)}{\partial\theta} = \int \rho_{\pi_\theta} (s) \int \pi_\theta (a|s) \frac{\partial\log \pi_\theta (a|s)}{\partial \theta} Q^{\pi_\theta} (s,a) dads.
\end{equation}
The remarkable property of the expression for the policy gradient given by \refeqn{eqn:pg_theorem} is that calculating the gradient $\frac{\partial J(\pi_\theta)}{\partial\theta}$ does not require calculating the gradient of the occupancy measure $\frac{\partial \rho_{\pi_\theta}}{\partial\theta} $. 
Thus, if we know $Q^{\pi_\theta}$, the gradient $\frac{\partial J(\pi_\theta)}{\partial\theta}$ can be approximated with Monte Carlo sampling using the trajectories obtained by following policy $\pi_\theta$. 

However, since $Q^{\pi_\theta} (s,a)$ is not known in advance, it has to be estimated. One possible choice of the estimator is to use empirical returns: $\hat Q^{\pi_\theta} (s,a) = \sum_{t \ge t'} \gamma^t r(s_t, a_t)$. The problem with this choice of estimator $\hat Q^{\pi_\theta} (s,a)$ is the large variance of the stochastic version of the gradients $\frac{\partial J(\pi_\theta)}{\partial\theta}$ which results in poor practical performance \cite{ciosek2018expected}. 

To reduce the variance in estimation of $\frac{\partial J(\pi_\theta)}{\partial\theta}$ many algorithms learn a parametric approximation $f_w (s,a)$ of $Q^{\pi_\theta}(s,a)$ by solving the regression problem: 
\begin{equation}
\label{eq:standard_regression}
     w^* = \text{argmin}_w \frac{1}{N} \sum_{i=1}^N \big(Q^{\pi_\theta}(s_i,a_i) - f_w(s_i,a_i)\big)^2
\end{equation}
where $\{(s_i, a_i) \}_{i=1}^N$ are state action pairs sampled with policy $\pi_\theta$. Using such $f_{w^*} (s,a)$ in place of $Q^{\pi_\theta}(s,a)$ can introduce bias to policy updates. The following Theorem derived in \cite{Sutton:1999:PGM:3009657.3009806, Konda:2003:AA:942271.942292} provides conditions under which function approximator $f_w(s,a)$ of $Q^{\pi_\theta} (s,a)$ does not introduce bias to $\frac{\partial J (\pi_\theta)}{\partial \theta}$.

\begin{theorem}
\label{thm:pg_func_approximator}
Let $f_w: S \times A \rightarrow \mathbb{R}$ be differentiable function approximator that satisfies the following conditions:
\begin{equation}
\label{eqn:condition_pg_unbiased}
\int \rho_{\pi_\theta}(s) \int  \pi_\theta (a|s) \big( Q^{\pi_\theta}(s,a) - f_w(s,a) \big) \frac{\partial f_w (s,a)}{\partial w} da ds =  0
\end{equation}
and
\begin{equation}
\label{eqn:condition_fw}
\frac{\partial f_w}{\partial w}(s,a) = \frac{1}{\pi_\theta (a|s)} \frac{\partial \pi_\theta (a|s)}{\partial \theta}.
\end{equation}
Then,
\begin{equation}
\frac{\partial J (\pi_\theta) }{\partial \theta}= \int \rho_{\pi_\theta}(s)\int \frac{\partial  \pi_\theta (a|s)}{\partial\theta} f_w (s,a) dads.
\end{equation}
\end{theorem}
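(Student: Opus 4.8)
The plan is to tie the desired identity back to the Policy Gradient Theorem in \refeqn{eqn:pg_theorem} and show that the discrepancy between using the true $Q^{\pi_\theta}$ and its approximation $f_w$ vanishes precisely because the two hypotheses interlock. First I would rewrite \refeqn{eqn:pg_theorem} using the log-derivative identity $\pi_\theta(a|s)\,\frac{\partial \log \pi_\theta(a|s)}{\partial \theta} = \frac{\partial \pi_\theta(a|s)}{\partial \theta}$, which puts the gradient in the equivalent form
\[
\frac{\partial J(\pi_\theta)}{\partial\theta} = \int \rho_{\pi_\theta}(s) \int \frac{\partial \pi_\theta(a|s)}{\partial\theta}\, Q^{\pi_\theta}(s,a)\, da\, ds.
\]
The target expression in the theorem is identical except that $Q^{\pi_\theta}$ is replaced by $f_w$, so the whole claim reduces to showing that the error term
\[
\Delta := \int \rho_{\pi_\theta}(s) \int \frac{\partial \pi_\theta(a|s)}{\partial\theta} \big( Q^{\pi_\theta}(s,a) - f_w(s,a)\big)\, da\, ds
\]
is zero.

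The key step is to substitute the compatibility condition \refeqn{eqn:condition_fw}. Rearranging it yields $\frac{\partial \pi_\theta(a|s)}{\partial\theta} = \pi_\theta(a|s)\,\frac{\partial f_w}{\partial w}(s,a)$, and inserting this into $\Delta$ converts the policy-gradient integrand into the regression-residual integrand, so that $\Delta$ becomes exactly the left-hand side of the orthogonality condition \refeqn{eqn:condition_pg_unbiased}. Since that quantity is assumed to vanish, we get $\Delta = 0$, and hence the two expressions for $\frac{\partial J(\pi_\theta)}{\partial\theta}$ coincide, which is the assertion of the theorem.

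I do not anticipate a genuine obstacle, since the argument is a direct algebraic substitution once the log-derivative rewriting is in place; the only real content is recognizing that the two hypotheses are \emph{designed} to cancel one another, with \refeqn{eqn:condition_fw} mapping the gradient integrand onto the residual integrand and \refeqn{eqn:condition_pg_unbiased} (the population first-order optimality condition of the regression in \refeqn{eq:standard_regression}) then annihilating it. The point requiring the most care is simply bookkeeping the factor of $\pi_\theta(a|s)$ through the substitution. I would also remark that the stated continuity of $\frac{\partial \pi_\theta}{\partial \theta}$, together with the usual boundedness of $Q^{\pi_\theta}$, guarantees that the integrals and the interchange implicit in invoking \refeqn{eqn:pg_theorem} are well defined.
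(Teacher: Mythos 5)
Your proof is correct: rearranging the compatibility condition \eqref{eqn:condition_fw} to $\frac{\partial \pi_\theta(a|s)}{\partial\theta} = \pi_\theta(a|s)\frac{\partial f_w}{\partial w}(s,a)$ and substituting turns the error term $\Delta$ into exactly the left-hand side of \eqref{eqn:condition_pg_unbiased}, which vanishes by hypothesis. The paper itself only cites this theorem from prior work without proving it, but your subtract-substitute-cancel argument is the standard one and is structurally identical to the proofs the paper does give for its Theorems \ref{thm:compatible1} and \ref{thm:compatible2}.
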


Note that condition \eqref{eqn:condition_pg_unbiased} can be satisfied by solving the regression problem $w^* = \text{argmin}_w \int \rho_{\pi_\theta}(s) \int  \pi_\theta (a|s) \big( Q^{\pi_\theta}(s,a) - f_w(s,a) \big)^2 da ds $. Satisfying the condition given by \eqref{eqn:condition_fw} can be done by setting $f_w(s,a) = w^\top \frac{\partial\log \pi_\theta(a|s)}{\partial \theta} + c_0$. The constant $c_0$ can be set to $V^\pi(s)$ to ensure that $\mathbb{E}_{a \sim \pi_\theta(\cdot|s) } f_w(s,a) = w^\top \mathbb{E}_{a \sim \pi_\theta(\cdot|s) }\frac{\partial\log \pi_\theta(a|s)}{\partial \theta} + V^\pi(s)=  V^\pi(s)$, since the expectation $\mathbb{E}_{a \sim \pi_\theta(\cdot|s) }\frac{\partial\log \pi_\theta(a|s)}{\partial \theta}  = \int \frac{\partial \pi_\theta (a|s)}{\partial \theta} da =  \frac{\partial \int \pi_\theta(a|s) da}{\partial \theta}  = 0$ for every $s \in S$.  See the discussion in \cite{Sutton:1999:PGM:3009657.3009806} for details.

Performing gradient ascent on $J(\pi_\theta)$ requires resampling the data with every policy update as the expectation in \refeqn{eqn:pg_theorem} is taken over the distribution of trajectories induced by $\pi_\theta$. The approach presented in \cite{Kakade:2002:AOA:645531.656005, pmlr-v37-schulman15}  tackles the problem of improving policy $\pi_\theta$ in a different way. 

Given the data gathered using the current policy $\pi_\theta$ we want to estimate the lower bound on the performance of an arbitrary policy $\pi_{\tilde\theta}$ and perform maximisation w.r.t $\tilde\theta$. Since the lower bound is tight at $\pi_{\tilde\theta} = \pi_\theta$ maximizing this lower bound w.r.t $\tilde\theta$ guarantees an improvement in the value of policy \cite{pmlr-v37-schulman15}. Hence, sequential optimization of the discussed lower bound is called Monotonic Policy Improvement. The lower bound derived in \cite{pmlr-v37-schulman15} is summarized in the following theorem.
\begin{theorem}
\label{thm:trust_region}
Let $\epsilon = \max_{s,a}  |A^{\pi_\theta}(s,a)|$ 
and $\alpha = \max_{s} \frac{1}{2}\int |\pi_\theta (a|s) - \pi_{\tilde\theta}(a|s) |da$. Then the $J ( \pi_{\tilde\theta} )$ can be lower bounded as follows:
\begin{equation}
\label{mpi_lb}
J( \pi_{\tilde\theta} ) \ge L_{\pi_\theta}( \pi_{\tilde\theta} ) - \frac{4 \epsilon \gamma}{\left(1-\gamma\right)^2} \alpha^2.
\end{equation}
\end{theorem}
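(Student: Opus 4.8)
The plan is to bound the mismatch between the exact objective $J(\pi_{\tilde\theta})$ and the surrogate $L_{\pi_\theta}(\pi_{\tilde\theta})$, whose only source of discrepancy is that the advantage correction is weighted by the state occupancy of $\pi_\theta$ rather than that of $\pi_{\tilde\theta}$. First I would establish the exact telescoping identity
\begin{equation}
J(\pi_{\tilde\theta}) = J(\pi_\theta) + \mathbb{E}_{\tau\sim\pi_{\tilde\theta}}\sum_{t\ge 0}\gamma^t A^{\pi_\theta}(s_t,a_t),
\end{equation}
which follows by writing $A^{\pi_\theta}(s,a) = R(s,a) + \gamma\,\mathbb{E}_{s'}V^{\pi_\theta}(s') - V^{\pi_\theta}(s)$, inserting it into the discounted sum, and collapsing the value terms so that only the total return and the initial $-V^{\pi_\theta}(s_0)$ remain; taking the expectation over $\rho_0$ then reproduces $J(\pi_{\tilde\theta}) - J(\pi_\theta)$.

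Next I would pass to a per-timestep form. Define the expected advantage $\bar{A}(s) := \mathbb{E}_{a\sim\pi_{\tilde\theta}(\cdot|s)} A^{\pi_\theta}(s,a)$. Because $\mathbb{E}_{a\sim\pi_\theta(\cdot|s)}A^{\pi_\theta}(s,a)=0$ by definition of the advantage, we may subtract it for free to get $\bar{A}(s) = \int \big(\pi_{\tilde\theta}(a|s) - \pi_\theta(a|s)\big) A^{\pi_\theta}(s,a)\,da$, whence $|\bar{A}(s)| \le 2\alpha\epsilon$. Combining the identity above with the definition of $L_{\pi_\theta}(\pi_{\tilde\theta})$ and using that $\rho_{\pi_\theta}$ is the discounted sum of the state marginals of $\pi_\theta$ gives
\begin{equation}
J(\pi_{\tilde\theta}) - L_{\pi_\theta}(\pi_{\tilde\theta}) = \sum_{t\ge 0}\gamma^t \Big( \mathbb{E}_{s_t\sim\pi_{\tilde\theta}}\bar{A}(s_t) - \mathbb{E}_{s_t\sim\pi_\theta}\bar{A}(s_t)\Big),
\end{equation}
reducing the task to controlling, at each $t$, the gap between the two state marginals applied to $\bar{A}$.

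The heart of the argument is a coupling. I would couple the trajectories of $\pi_\theta$ and $\pi_{\tilde\theta}$ under shared randomness so that at every visited state the two sampled actions coincide with probability at least $1-\alpha$, which is exactly what the total-variation bound $\alpha$ permits. Letting $n_t$ count the timesteps before $t$ at which the coupled actions disagreed, the event $\{n_t = 0\}$ forces the two trajectories to share the same state, so its contribution cancels in the per-timestep gap and only $\{n_t \ge 1\}$ survives. Bounding $P(n_t \ge 1) \le 1-(1-\alpha)^t \le \alpha t$ and applying $|\bar{A}| \le 2\alpha\epsilon$ on each branch yields a per-timestep bound of $4\alpha\epsilon\big(1-(1-\alpha)^t\big) \le 4\alpha^2\epsilon\, t$.

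Finally I would sum $\sum_{t\ge 0}\gamma^t t = \gamma/(1-\gamma)^2$ to obtain $|J(\pi_{\tilde\theta}) - L_{\pi_\theta}(\pi_{\tilde\theta})| \le \frac{4\epsilon\gamma}{(1-\gamma)^2}\alpha^2$, from which the stated lower bound is immediate. The main obstacle is the coupling step: one must construct the joint trajectory distribution so that the per-state disagreement probability is genuinely controlled by $\alpha$, verify that agreement up to time $t$ implies identical state marginals, and bound the conditional expectations of $\bar{A}$ on the disagreement event. The remaining steps are substitution and summation of a geometric-type series.
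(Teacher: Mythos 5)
The paper does not prove this theorem itself; it restates it from Schulman et al. (TRPO), and your argument is a correct reconstruction of exactly the proof given in that reference: the Kakade--Langford telescoping identity, the bound $|\bar A(s)|\le 2\alpha\epsilon$ via total variation, the maximal coupling of the two policies with per-state disagreement probability at most $\alpha$, the per-timestep bound $4\alpha\epsilon\bigl(1-(1-\alpha)^t\bigr)\le 4\alpha^2\epsilon t$, and the summation $\sum_{t\ge 0}\gamma^t t=\gamma/(1-\gamma)^2$. All steps check out, so this is essentially the same approach as the cited source and nothing further is needed.
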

The work done in \cite{pmlr-v70-achiam17a} extends this results so that the $\max$ operator in $\alpha$ can be replaced with the expected value taken w.r.t $\rho^\pi$. Calculating the gradient $\frac{\partial L_{\pi_{\theta}}(\pi_{\tilde\theta})}{ \partial \pi_{\tilde\theta} }$ is straightforward as the occupancy measure $\rho_{\pi_\theta}$ in the definition of $L_{\pi_\theta}\left( \pi_{\tilde\theta} \right)$ does not depend on $\tilde\theta$.

In the practical setting $ L_{\pi_\theta}( \pi_{\tilde\theta}) $ is optimized by constraining the divergence between $\pi_\theta$ and $\pi_{\tilde\theta}$ \cite{pmlr-v37-schulman15, DBLP:journals/corr/SchulmanWDRK17}. The algorithm derived in \cite{pmlr-v37-schulman15} is closely similar to the natural gradient policy optimisation \cite{Kakade:2001:NPG:2980539.2980738}. Note that  $ L_{\pi_\theta}( \pi_{\tilde\theta}) $ is a biased approximation of $J(\pi_{\tilde\theta})$ but the bias can be controlled by restricting the distance between $\pi_{\tilde\theta}$ and $\pi_\theta$.

Optimizing $L_{\pi_\theta }( \pi_{\tilde\theta})$ requires knowing the values of $A^{\pi_\theta} (s,a)$. There are various ways of estimating the critic $\hat A^{\pi_\theta} (s,a)$, for instance see \cite{schulman2015high}. However, similarly to the case of \refthm{thm:pg_func_approximator}, using an arbitrary parametrization of the critic introduces bias to an estimate of  $\frac{\partial L_{\pi_{\theta}}( \pi_{\tilde\theta} )}{ \partial \pi_{\tilde\theta} }$ .

\section{Compatible features for surrogate policy optimization} 

In this section, we seek a parametric form $f_w (s,a)$ of an approximator of $Q^{\pi_\theta} (s,a)$ for which $\frac{\partial L_{\pi_{\theta}}( \pi_{\tilde\theta} )}{\partial\tilde\theta}$ remains unbiased. To this end, we follow the approach presented in \cite{Sutton:1999:PGM:3009657.3009806}. We derive the following theorem.
\begin{theorem}[Compatible features for Monotonic Policy Improvement]
\label{thm:compatible1}
 Assuming the following condition is satisfied:
\begin{equation}
\label{assumption:least_squares}
\int \rho_{\pi_\theta} (s) \int \pi_{\theta} (a|s) \big (Q^{\pi_\theta}(s,a) - f_w(s,a) \big ) \frac{\partial f_w  (s,a)}{\partial w}  da ds = 0
\end{equation}
and
\begin{equation}
\label{assumption:compatible}
f_w(s,a) = w^\top \frac{\pi_{\tilde\theta}(a|s) }{\pi_\theta(a|s) } \frac{\partial\log\pi_{\tilde\theta}(a|s) }{\partial\tilde\theta} + c_0. 
\end{equation}
Then,
\begin{equation}
\frac{\partial L_{\pi_\theta}( \pi_{\tilde\theta} )}{\partial\tilde\theta} = \int \rho_{\pi_\theta} (s)\int \frac{\partial\pi_{\tilde\theta}(a|s)}{\partial\tilde\theta} f_w (s,a ) da ds.
\end{equation}
\end{theorem}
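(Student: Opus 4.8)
The plan is to differentiate the surrogate objective directly and then show that swapping the true action-value $Q^{\pi_\theta}$ for the approximator $f_w$ changes nothing, with the two hypotheses doing the work in tandem. First I would differentiate $L_{\pi_\theta}(\pi_{\tilde\theta})$ with respect to $\tilde\theta$. Since $J(\pi_\theta)$ and the occupancy measure $\rho_{\pi_\theta}$ are built entirely from the fixed policy $\pi_\theta$, neither depends on $\tilde\theta$, so only the factor $\pi_{\tilde\theta}(a|s)$ is differentiated, giving
\[
\frac{\partial L_{\pi_\theta}(\pi_{\tilde\theta})}{\partial\tilde\theta} = \int \rho_{\pi_\theta}(s) \int \frac{\partial \pi_{\tilde\theta}(a|s)}{\partial\tilde\theta}\, A^{\pi_\theta}(s,a)\, da\, ds.
\]

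Next I would pass from the advantage to the action-value function so as to match the form of hypothesis \eqref{assumption:least_squares}. Because $A^{\pi_\theta}(s,a) = Q^{\pi_\theta}(s,a) - V^{\pi_\theta}(s)$ and $V^{\pi_\theta}(s)$ is independent of $a$, the baseline term integrates away: $\int \frac{\partial \pi_{\tilde\theta}(a|s)}{\partial\tilde\theta}\, V^{\pi_\theta}(s)\, da = V^{\pi_\theta}(s)\, \frac{\partial}{\partial\tilde\theta}\int \pi_{\tilde\theta}(a|s)\, da = 0$ for every $s$. The same observation shows that the additive constant $c_0$ in \eqref{assumption:compatible} contributes nothing to the target right-hand side. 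It therefore suffices to prove that the gradient above, rewritten with $Q^{\pi_\theta}$ replacing $A^{\pi_\theta}$, equals $\int \rho_{\pi_\theta}(s) \int \frac{\partial \pi_{\tilde\theta}(a|s)}{\partial\tilde\theta} f_w(s,a)\, da\, ds$.

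The core of the argument is to show that the discrepancy
\[
\Delta := \int \rho_{\pi_\theta}(s) \int \frac{\partial \pi_{\tilde\theta}(a|s)}{\partial\tilde\theta}\big(Q^{\pi_\theta}(s,a) - f_w(s,a)\big)\, da\, ds
\]
vanishes, and this is exactly where \eqref{assumption:compatible} meets \eqref{assumption:least_squares}. Differentiating the compatible form gives $\frac{\partial f_w(s,a)}{\partial w} = \frac{\pi_{\tilde\theta}(a|s)}{\pi_\theta(a|s)}\frac{\partial\log\pi_{\tilde\theta}(a|s)}{\partial\tilde\theta}$. Substituting this into \eqref{assumption:least_squares}, the sampling weight $\pi_\theta(a|s)$ cancels against the denominator of the importance ratio, and the log-derivative identity $\pi_{\tilde\theta}\,\frac{\partial\log\pi_{\tilde\theta}}{\partial\tilde\theta} = \frac{\partial\pi_{\tilde\theta}}{\partial\tilde\theta}$ turns the surviving factor into $\frac{\partial\pi_{\tilde\theta}(a|s)}{\partial\tilde\theta}$. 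The regression condition then reads precisely $\Delta = 0$, which closes the proof.

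I expect the main obstacle to be conceptual rather than computational: recognizing why the importance ratio $\pi_{\tilde\theta}/\pi_\theta$ must appear in the feature definition \eqref{assumption:compatible}. The regression \eqref{assumption:least_squares} is performed under the sampling distribution $\pi_\theta$, whereas the target gradient is an expectation against $\frac{\partial\pi_{\tilde\theta}}{\partial\tilde\theta}$; the ratio is the unique reweighting that aligns the orthogonality enforced by the least-squares fit with the direction needed to annihilate $\Delta$. Getting this reweighting right is what distinguishes the present result from the classical compatible-features statement in \refthm{thm:pg_func_approximator}, where the sampling and target policies coincide and no importance correction is required.
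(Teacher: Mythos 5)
Your proposal is correct and follows essentially the same route as the paper's proof: both reduce the problem to showing that the residual $Q^{\pi_\theta}-f_w$ is orthogonal to $\frac{\partial \pi_{\tilde\theta}(a|s)}{\partial\tilde\theta}$ under $\rho_{\pi_\theta}$, and both exploit the log-derivative identity together with the cancellation of the importance ratio against the sampling density $\pi_\theta(a|s)$ so that condition \eqref{assumption:least_squares} becomes exactly that orthogonality statement. The only (cosmetic) difference is direction: you verify the given feature form by substituting $\frac{\partial f_w}{\partial w}$ into \eqref{assumption:least_squares}, whereas the paper derives the feature form by forcing the bracketed term in its combined integral to vanish and integrating with respect to $w$.
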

\begin{proof}
Firstly, we note that the value function $V^{\pi_{\theta}}$ in the definition of $L_{\pi_\theta} ( \pi_{\tilde\theta})$ fulfils the role of the control variate, i.e. it does not influence the expectation of the gradient. To note this, we analyse the gradient $\frac{\partial L_{\pi_\theta}(\pi_{\tilde\theta})}{\partial\tilde\theta}$:
\begin{align}
\label{eq:surrogate_L}
\frac{\partial L_{\pi_\theta}( \pi_{\tilde\theta} )}{\partial\tilde\theta} 
& = \frac{\partial}{\partial\tilde\theta} \int \rho_{\pi_\theta} (s)\int \pi_{\tilde\theta} (a|s)  A^{\pi_\theta}(s,a ) da ds \notag\\
& = \int \rho_{\pi_\theta} (s)\int \frac{\partial \pi_{\tilde\theta} (a|s)}{\partial\tilde\theta} A^{\pi_\theta} (s,a) da ds   \notag\\
& = \int \rho_{\pi_\theta}(s)\int \frac{\partial \pi_{\tilde\theta} (a|s)}{\partial\tilde\theta}  \big ( Q^{\pi_\theta} (s,a) - V^{\pi_\theta}(s) \big )  da ds   \notag\\
& = \int \rho_{\pi_\theta}(s) \Bigg [ \int \frac{\partial \pi_{\tilde\theta} (a|s)}{\partial\tilde\theta} Q^{\pi_\theta} (s,a ) da - \int \frac{\partial \pi_{\tilde\theta} (a|s)}{\partial\tilde\theta}  V^{\pi_\theta} (s) da \Bigg ] ds   \notag\\
& = \int \rho_{\pi_\theta}(s)\int \frac{\partial \pi_{\tilde\theta}(a|s)}{\partial\tilde\theta} Q^{\pi_\theta} (s,a ) da ds 
,
\end{align}
where the second step is allowed since $\frac{\partial \pi_\theta(a|s)}{\partial \theta}$ exist and is continuous $\forall s \in S$, $\forall a \in A$; 
and the last step is due to the fact that $V^{\pi_\theta}$ does not depend on $\tilde\theta$, i.e.:  
$ \int \frac{\partial \pi_{\tilde\theta}(a|s)}{\partial\tilde\theta}  V^{\pi_\theta} (s) da  = 
\frac{\partial  V^{\pi_\theta}(s)}{\partial\tilde\theta} = 0$.

Next we define $\mathcal{L}_{\pi_\theta, w}( \pi_{\tilde\theta}) := \int \rho_{\pi_\theta} (s)\int \pi_{\tilde\theta} (a|s) f_w(s,a) da ds$. We seek a condition under which the following equality holds:
\begin{equation}
\label{eqn:objective}
\frac{\partial L_{\pi_\theta}}{\partial\tilde\theta} - \frac{\partial\mathcal{L}_{\pi_\theta, w}}{\partial\tilde\theta} = 0. 
\end{equation}
By subtracting $\frac{\partial\mathcal{L}_{\pi_\theta, w}}{\partial\tilde\theta}$ and the assumption given by \eqref{assumption:least_squares} from \eqref{eq:surrogate_L}, we obtain:
\begin{align}
& \frac{\partial L_{\pi_\theta}( \pi_{\tilde\theta})}{\partial\tilde\theta}- \frac{\partial\mathcal{L}_{\pi_\theta, w}( \pi_{\tilde\theta}) }{\partial\tilde\theta} 
    = \int \rho_{\pi_\theta} (s) \frac{\partial\pi_{\tilde\theta}(a|s)}{\partial\tilde\theta} \big(Q^{\pi_\theta}(s,a) -f_w(s,a)\big) da ds \nonumber \\
&\; = \int \rho_{\pi_\theta} (s) \int \pi_{\tilde\theta} (a|s) \frac{\partial\log\pi_{\tilde\theta}(a|s)}{\partial\tilde\theta} \big(Q^{\pi_\theta}(s,a) -f_w(s,a)\big) da ds \nonumber\\
&\quad - \int \rho_{\pi_\theta} (s) \int \pi_{\theta} (a|s) \big (Q^{\pi_\theta}(s,a) - f_w(s,a)\big ) \frac{\partial f_w(s, a) }{\partial w}da ds  \nonumber \\
&\; = \int \rho_{\pi_\theta} (s)\int \pi_{\theta} (a|s) \frac{\pi_{\tilde\theta}(a|s) }{\pi_\theta(a|s) } \frac{\partial\log\pi_{\tilde\theta}(a|s)}{\partial\tilde\theta} \big(Q^{\pi_\theta}(s,a) -f_w(s,a) \big) da ds \nonumber \\
&\quad - \int \rho_{\pi_\theta} (s) \int \pi_{\theta} (a|s) \big(Q^{\pi_\theta}(s,a) - f_w(s,a)\big) \frac{\partial f_w(s, a)}{\partial w}da ds  \nonumber \\
&\; = \int \rho_{\pi_\theta} (s) \int \pi_{\theta} (a|s) \big(Q^{\pi_\theta}(s,a) - f_w(s,a)\big) \Bigg [\frac{\pi_{\tilde\theta}(a|s ) }{\pi_\theta(a|s) } \frac{\partial\log\pi_{\tilde\theta}(a|s)}{\partial\tilde\theta} - \frac{\partial f_w(s, a)}{\partial w}  \Bigg ] dads .
\end{align}
where we have used the log trick, 
$ \pi_{\tilde\theta}(a|s) \frac{\partial \log \pi_{\tilde\theta}(a|s)} {\partial \tilde\theta} =  \frac{\partial \pi_{\tilde\theta}(a|s)}  {\partial \tilde\theta}$, in the second line. Hence \refeqn{eqn:objective} can by satisfied by requiring:
\begin{equation}
\frac{\pi_{\tilde\theta}}{\pi_\theta} \frac{\partial\log\pi_{\tilde\theta}}{\partial\tilde\theta}- \frac{\partial f_w}{\partial w} = 0 .
\end{equation}
Integrating this last equation w.r.t $w$ yields:
$
f_w(s,a) = w^\top \frac{\pi_{\tilde\theta}(a|s) }{\pi_\theta(a|s) } \frac{\partial\log\pi_{\tilde\theta}(a|s)}{\partial\tilde\theta} + c_0
$,
which completes the proof.
\end{proof}

Again setting $c_0 = V^{\pi_{\theta}}(s)$ ensures that:
\begin{align}
    \mathbb{E}_{a \sim \pi_\theta(\cdot|s) } f_w(s,a) 
&= 
    w^\top \mathbb{E}_{a \sim \pi_\theta(\cdot|s) } \frac{\pi_{\tilde\theta}(a|s) }{\pi_\theta(a|s) } \frac{\partial\log\pi_{\tilde\theta}(a|s)}{\partial\tilde\theta} +  V^{\pi_{\theta}}(s) 
\notag\\
&=  
    w^\top \mathbb{E}_{a \sim \pi_{\tilde\theta}(\cdot|s) } \frac{\partial\log\pi_{\tilde\theta}(a|s)}{\partial\tilde\theta} +  V^{\pi_{\theta}}(s) 
\notag = w^\top \frac{\partial  \int \pi_{\tilde\theta}(a|s) da  }{\partial \tilde \theta}+ V^{\pi_{\theta}}(s) 
\notag\\
&= 
    V^{\pi_{\theta}}(s)
,\quad 
    \forall s \in S, \forall a \in A.
\end{align}
In reference \cite{Sutton:1999:PGM:3009657.3009806} the authors conjecture that the the compatible features $\frac{\partial \log \pi_\theta (a|s)}{\partial \theta}$ might be the only choice of features that lead to an unbiased policy gradient $\frac{\partial J (\pi_\theta)}{\partial\theta}$. In the case of $L_{\pi_\theta}(\pi_{\tilde \theta} )$ we can provide another choice of features leading to an unbiased gradient $\frac{\partial L_{\pi_\theta}(\pi_{\tilde\theta})}{\partial\tilde\theta}$. The features derived in \refthm{thm:compatible1} depend on importance sampling weights $\frac{\pi_{\tilde\theta}(a|s)}{\pi_{\tilde\theta}(a|s)}$. To remove importance sampling weights $\frac{\pi_{\tilde\theta}(a|s)}{\pi_{\tilde\theta}(a|s)}$ from derived compatible features we modify the condition in \refeqn{assumption:least_squares} by replacing action sampling distribution $\pi_\theta (\cdot |s)$ with $\pi_{\tilde\theta} (\cdot |s)$.
\begin{theorem}
\label{thm:compatible2}
Assuming that the following condition is satisfied:
\begin{equation}
\label{assumption:least_squares2}
\int \rho_{\pi_\theta} (s) \int \pi_{\tilde\theta} (a|s) \big(Q^{\pi_\theta}(s,a) - f_w(s,a) \big) \frac{\partial f_w (s, a)}{\partial w}  da ds = 0
\end{equation}
and
\begin{equation}
\label{features:compatible2}
f_w(s,a) = w^\top  \frac{\partial \log \pi_{\tilde\theta}(a|s)}{\partial \tilde \theta} + c_0.
\end{equation}
Then,
\begin{equation}
\frac{\partial L_{\pi_\theta}( \pi_{\tilde\theta} )}{\partial\tilde\theta} = \int \rho_{\pi_\theta} (s)\int \frac{\partial\pi_{\tilde\theta}(a|s)}{\partial\tilde\theta}  f_w (s,a) da ds.
\end{equation}
\end{theorem}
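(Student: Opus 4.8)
The plan is to follow the template of the proof of \refthm{thm:compatible1} almost verbatim, exploiting the fact that changing the sampling measure in the orthogonality condition from $\pi_\theta$ to $\pi_{\tilde\theta}$ is precisely what removes the importance-sampling weights from the compatible features. First I would reuse the opening computation of the previous proof, which establishes that $V^{\pi_\theta}$ acts as a control variate, to reduce the target gradient to $\frac{\partial L_{\pi_\theta}(\pi_{\tilde\theta})}{\partial\tilde\theta} = \int \rho_{\pi_\theta}(s) \int \frac{\partial \pi_{\tilde\theta}(a|s)}{\partial\tilde\theta} Q^{\pi_\theta}(s,a)\, da\, ds$. This step is unchanged, since it relies only on the differentiability of $\pi_{\tilde\theta}$ and on $V^{\pi_\theta}$ being independent of $\tilde\theta$.

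Next I would introduce the approximate surrogate $\mathcal{L}_{\pi_\theta,w}(\pi_{\tilde\theta}) := \int \rho_{\pi_\theta}(s) \int \pi_{\tilde\theta}(a|s) f_w(s,a)\, da\, ds$ and reduce the claim to showing $\frac{\partial L_{\pi_\theta}}{\partial\tilde\theta} - \frac{\partial\mathcal{L}_{\pi_\theta,w}}{\partial\tilde\theta} = 0$. Forming this difference gives $\int \rho_{\pi_\theta}(s) \int \frac{\partial \pi_{\tilde\theta}(a|s)}{\partial\tilde\theta}\big(Q^{\pi_\theta}(s,a)-f_w(s,a)\big)\,da\,ds$. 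The decisive move is to apply the log-trick $\frac{\partial \pi_{\tilde\theta}}{\partial\tilde\theta} = \pi_{\tilde\theta}\frac{\partial\log\pi_{\tilde\theta}}{\partial\tilde\theta}$: unlike in \refthm{thm:compatible1}, the sampling density produced here is $\pi_{\tilde\theta}(a|s)$, which now matches the density appearing in the modified orthogonality condition \eqref{assumption:least_squares2}.

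I would then subtract the hypothesis \eqref{assumption:least_squares2} (equal to zero) from this expression; since both integrals are now against the common measure $\rho_{\pi_\theta}(s)\pi_{\tilde\theta}(a|s)$, I can factor out the residual $Q^{\pi_\theta}-f_w$ to obtain $\int \rho_{\pi_\theta}(s)\int \pi_{\tilde\theta}(a|s)\big(Q^{\pi_\theta}-f_w\big)\big[\frac{\partial\log\pi_{\tilde\theta}}{\partial\tilde\theta} - \frac{\partial f_w}{\partial w}\big]\,da\,ds$. A sufficient condition for this to vanish for an arbitrary residual is $\frac{\partial\log\pi_{\tilde\theta}}{\partial\tilde\theta} = \frac{\partial f_w}{\partial w}$, and integrating this identity in $w$ recovers exactly the feature form \eqref{features:compatible2}, which closes the argument.

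The argument presents no genuine obstacle beyond \refthm{thm:compatible1}; the only point requiring care is the bookkeeping of which action distribution the log-trick produces, and checking that it coincides with the measure in \eqref{assumption:least_squares2}. The entire content of the result is that aligning the sampling measure in the least-squares condition with $\pi_{\tilde\theta}$ lets the two integrals share a common weight, so the importance ratio $\pi_{\tilde\theta}/\pi_\theta$ never enters the features. I would also flag, though it lies outside the formal statement, that \eqref{assumption:least_squares2} samples actions from $\pi_{\tilde\theta}$ whereas the data are gathered under $\pi_\theta$, so enforcing it in practice still requires reweighting --- the weights are merely displaced from the feature definition into the regression itself.
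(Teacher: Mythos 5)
Your proposal is correct and follows essentially the same route as the paper's own proof: reduce to the difference $\frac{\partial L_{\pi_\theta}}{\partial\tilde\theta} - \frac{\partial\mathcal{L}_{\pi_\theta,w}}{\partial\tilde\theta}$, apply the log-trick so the sampling density becomes $\pi_{\tilde\theta}(a|s)$, subtract the orthogonality condition \eqref{assumption:least_squares2} under the now-common measure, and kill the bracket by the feature choice \eqref{features:compatible2}. Your closing remark about the practical need to reweight the regression matches the paper's own discussion following the theorem.
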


\begin{proof}
We derive the result by following a similar line of thought as in proof of \refthm{thm:compatible1}. Again we use $\mathcal{L}_{\pi_\theta, w}\left( \pi_{\tilde\theta} \right) = \int \rho_{\pi_\theta} (s)\int \pi_{\tilde\theta} (a|s ) f_w(s,a) da ds$. We subtract $\frac{\partial\mathcal{L}_{\pi_\theta, w}( \pi_{\tilde\theta})}{\partial\tilde\theta}$ and the assumption given by \eqref{assumption:least_squares2} from $\frac{\partial L_{\pi_\theta}( \pi_{\tilde\theta})}{\partial\tilde\theta}$, which yields:
\begin{align}
\label{eqn:compatible2_proof}
    & \frac{\partial L_{\pi_\theta}( \pi_{\tilde\theta})}{\partial\tilde\theta} - \frac{\partial\mathcal{L}_{\pi_\theta, w}( \pi_{\tilde\theta})}{\partial\tilde\theta} = \notag \\
    &\; = \int \rho_{\pi_\theta} (s) \int \pi_{\tilde\theta} (a|s) \frac{\partial  \log \pi_{\tilde\theta} (a|s)}{\partial \tilde \theta} \big(Q^{\pi_\theta}(s,a) - f_w(s,a) \big) \notag  \\
    &\quad - \int \rho_{\pi_\theta} (s) \int \pi_{\tilde\theta} (a|s) \big(Q^{\pi_\theta}(s,a) - f_w(s,a) \big) \frac{\partial f_w (s, a)}{\partial w}  da ds  \notag \\
    &\; = \int \rho_{\pi_\theta} (s) \int \pi_{\tilde\theta} (a|s) \big(Q^{\pi_\theta}(s,a) - f_w(s,a) \big) \Bigg [ \frac{\partial  \log \pi_{\tilde\theta} (a|s)}{\partial \tilde \theta}  -  \frac{\partial f_w (s,a)}{\partial w}   \Bigg ] da ds
.
\end{align}
The integral in \refeqn{eqn:compatible2_proof} can be set to zero by setting $f_w$ to solve the differential equation:
$
    \frac{\partial  \log \pi_{\tilde\theta} }{\partial \tilde \theta}  -  \frac{\partial f_w}{\partial w} = 0,
$,
which results in desired compatible features $f_w(s,a) = w^\top  \frac{\partial \log \pi_{\tilde\theta}(a|s)}{\partial \tilde \theta} + c_0$.
\end{proof}

Note the features derived in \refthm{thm:compatible2} are analogous to ones derived in \refthm{thm:pg_func_approximator} with the difference that the occupancy measure is taken w.r.t the data gathering policy $\pi_{\theta}$ in the case of \refthm{thm:compatible2}.

Assumption \eqref{assumption:least_squares2} in \refthm{thm:compatible2} has a natural interpretation. Intuitively, the critic values $f_w(s,a)$ should be more accurate for actions with high probability under policy $\pi_{\tilde\theta}$. More formally, the condition in \eqref{assumption:least_squares2} can be satisfied by finding a parameter $w^*$ that minimises the weighted quadratic error:
$w^* = \arg\min \int \rho_{\pi_\theta} (s) \int \pi_{\theta} (a|s) \frac{\pi_{\tilde\theta}(a|s)}{\pi_{\tilde\theta}(a|s)} (Q^{\pi_\theta}(s,a) - f_w(s,a))^2 da ds$, 
where weights $\frac{\pi_{\tilde\theta}(a|s)}{\pi_{\tilde\theta}(a|s)}$ ensure improved accuracy for likely actions $a$ given $s$ under target policy $\pi_{\tilde\theta} (\cdot |s)$.  
Similarly, when target policy $\pi_{\tilde\theta} (\cdot |s)$ assigns low probability to an action $a$ the error in the critic $f_w(s,a)$ estimation is not relevant as it will not influence the gradient $\frac{\partial L_{\pi_\theta}(\pi_{\tilde\theta})}{\partial\tilde\theta}$.

Given a sequence of state action pairs $\{(s_t, a_t) \}_{t=1}^T$ gathered by following policy $\pi_\theta$,
the integrals in condition \eqref{assumption:least_squares2} can be also approximated with samples, which leads to a weighted regression problem emerging from \refthm{thm:compatible2}:
\begin{equation}
\label{eq:weighted_regression}
    w^* = \text{argmin}_w \frac{1}{T} \sum_{s,a}\frac{\pi_{\tilde\theta} (a|s)}{\pi_\theta(a|s)} \big(Q^{\pi_\theta}(s,a) - f_w(s,a)\big)^2.
\end{equation}

\section{Experiments}
In this section we use the NChain \cite{Strens:2000:BFR:645529.658114} environment to compare the gradient $\frac{\partial L_{\pi_\theta}( \pi_{\tilde\theta} )}{\partial\tilde\theta}$ calculated in two different ways: 
i) using a standard linear critic of the form $f_w(s,a) = w_2 s + w_1 a + w_0$, learned with the standard least squares regression approach given by \eqref{eq:standard_regression}; 
and ii) using a linear critic with compatible features given by \eqref{features:compatible2}, learned by solving the weighted the least squared regression problem \eqref{eq:weighted_regression}.
We use the following parametric policy function $\pi_\theta(a|s) = \sigma( \theta_1 s + \theta_2 a)$. 
Hence, the proposed compatible features are given by $\frac{\partial \log \pi_{\tilde\theta}(a|s)}{\partial \tilde \theta} = \big [\big(1-\sigma( \theta_1 s + \theta_2 a) \big ) s,  \big (1-\sigma(\theta_1 s + \theta_2 a) \big ) a \big ]^\top$, 
since $\frac{d\sigma(x)}{d x} = \sigma(x)\big(1-\sigma(x)\big )$ and  $\frac{d \log\sigma(f(x))}{d x} = \big(1-\sigma(f(x))\big ) \frac{df(x)}{dx}$. 
We provide true state action value function $Q^{\pi_\theta}(s,a)$ as targets to learn critics. We estimate the gradient $\frac{\partial L_{\pi_\theta}( \pi_{\tilde\theta} )}{\partial\tilde\theta}$ using rollouts from policy $\pi_\theta$. We set policy parameters to $\theta_1 = 0.2$; $\theta_2=0.5$ and $\tilde\theta_1 = 0.3$; $\tilde\theta_2 = 0.6$. 
We use the expression for $\frac{\partial L_{\pi_\theta}( \pi_{\tilde\theta} )}{\partial\tilde\theta}$ given by \refeqn{eq:surrogate_L}.

We compare the obtained gradients to ground truth calculated using the true state action value function $Q^{\pi_\theta}(s,a)$. 
In both cases, we provide an increasing number of rollouts from policy $\pi_\theta$ to both approaches and analyse the errors in estimation. We report bias, variance and RMSE of estimated gradients in Figure 1. 
\begin{figure}[h!]
\centering
 \includegraphics[width=85mm, height=50mm]{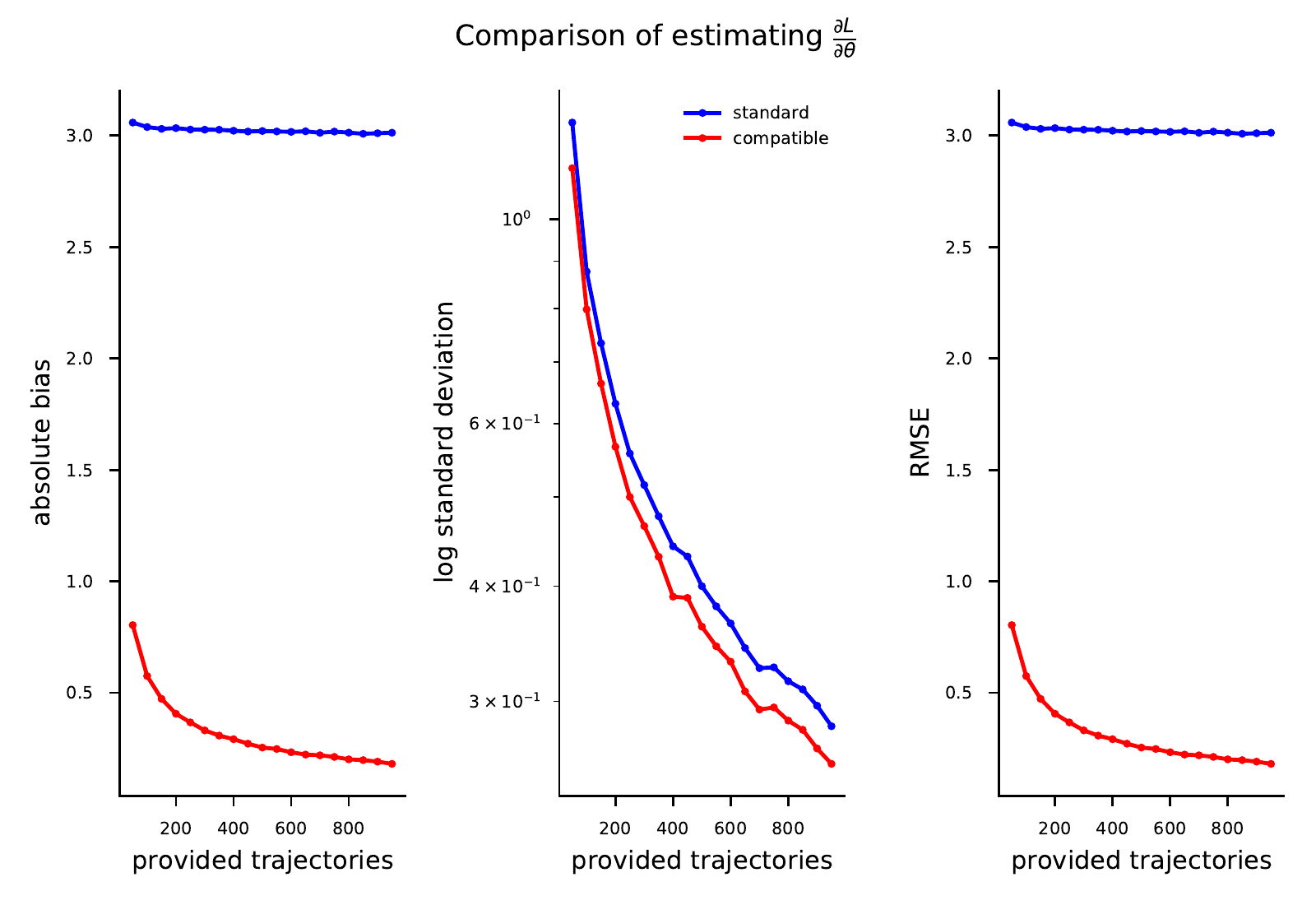}
 \caption{Comparison of gradient $\frac{\partial L_{\pi_\theta}( \pi_{\tilde\theta} )}{\partial\tilde\theta}$ derived with compatible parametrization and by using standard critic. Employing a critic with derived compatible features allows to provide unbiased gradient of $\frac{\partial L_{\pi_\theta}( \pi_{\tilde\theta} )}{\partial\tilde\theta}$. The results are averaged over $250$ trials.}
\end{figure}

As expected, the bias in the estimation of $\frac{\partial L_{\pi_\theta}( \pi_{\tilde\theta} )}{\partial\tilde\theta}$ introduced by the standard critic cannot be removed by increasing the number of provided trajectories, as it is caused by employing a noncompatible function approximator. The derived compatible features are unbiased by construction, hence provide substantially improved quality of estimation of $\frac{\partial L_{\pi_\theta}( \pi_{\tilde\theta} )}{\partial\tilde\theta}$.
Interestingly, using compatible features also provides slightly lower variance than using the standard critic.

\section{Conclusions}
We have analysed the use of parametric critics to estimate the policy optimization surrogate objective in a systematical way. As a consequence,  we can provide conditions under which the approximation does not introduce bias to the policy updates. This result holds for a general class of policies, including policies parametrized by deep neural networks. 
We have shown that for the investigated surrogate objective there exists two different choices of compatible features. We empirically demonstrated that the  compatible features allow to estimate the gradient of surrogate objective more accurately.

\bibliographystyle{unsrt}
\bibliography{neurips_2019}

\end{document}